\theoremstyle{remark}
\newtheorem{theorem}{Theorem}
\theoremstyle{remark}
\DeclareRobustCommand\onedot{\futurelet\@let@token\@onedot}
\def\@onedot{\ifx\@let@token.\else.\null\fi\xspace}
\def\etc{\emph{etc}\onedot} 
\def \Real {\mathbb{R}}
\newcommand{\e}{\begin{equation}}
\newcommand{\ee}{\end{equation}}
\newcommand{\en}{\begin{equation*}}
\newcommand{\een}{\end{equation*}}
\newcommand{\eqn}{\begin{eqnarray}}
\newcommand{\eeqn}{\end{eqnarray}}
\newcommand{\bmat}{\begin{bmatrix}}
	\newcommand{\emat}{\end{bmatrix}}
\newcommand{\vct}[1]{\boldsymbol{#1}}
\newcommand{\mtx}[1]{\boldsymbol{#1}}
\def \lg        {\langle}
\def \rg        {\rangle}
\newcommand{\vx}{\vct{x}}
\newcommand{\vy}{\vct{y}}
\newcommand{\vz}{\vct{z}}
\newcommand{\mD}{\mtx{D}}
\newcommand{\mH}{\mtx{H}}
\newcommand{\mJ}{\mtx{J}}
\newcommand{\mL}{\mtx{L}}
\newcommand{\mM}{\mtx{M}}
\newcommand{\mS}{\mtx{S}}
\newcommand{\mU}{\mtx{U}}
\newcommand{\mW}{\mtx{W}}
\newcommand{\mX}{\mtx{X}}
\newcommand{\mZ}{\mtx{Z}}
\newcommand{\mId}{{\bf I}}
\newlength{\imgwidth}
\newcommand{\twoCol}[2]{\ifthenelse{\boolean{twoColVersion}} {#1} {#2} }
\newcommand{\optr}[1]{\operatorname{\textbf{tr}}(#1)}
\newcommand{\optrb}[1]{\operatorname{\textbf{tr}}\left\{#1\right\}}
\begin{document}
\title{Robust Principal Component Analysis: \\A Construction Error Minimization Perspective}
\author{Kai Liu~\thanks{Computer Science Division, Clemson University. \texttt{kail@clemosn.edu}}
	\\
	\and
	Yarui Cao~\thanks{Computer Science Division, Clemson University. \texttt{yaruic@clemson.edu}}
}
\maketitle
\begin{abstract}
Principal Component Analysis (PCA) has been widely used in data mining and analysis as it can significantly reduce data dimensionality while maintaining the most useful information carried in data. However, from the perspective of minimizing reconstruction error, each data sample’s  error is squared, and therefore sensitive to widely existed outliers and noises which increases dramatically as data dimensionality grows. To alleviate the problem, many researchers focus on improving the robustness of PCA by using more robust norm such as $\ell_{2,p} (p<2)$ or $\ell_1$-norm loss formulation. In this paper we propose a novel optimization framework to systematically solve $\ell_{2,p}$ and $\ell_1$-norm-based PCA problem with rigorous theoretical guarantee, based on which we investigate a very computationally economic updating version. The proposed methods are not only robust to outliers but also easy to implement.
\end{abstract}
\section{Introduction}
In the era of big data, large datasets are increasingly widespread in many areas. To explore these datasets, many machine learning algorithms are proposed to reduce their dimensionality as well as preserve most useful information they carry. Given this purpose, many techniques are developed, and Principal Component Analysis (PCA) \cite{pearson1901liii} is one of the most widely used one. PCA could extract main features and preserve most important information of the data by projecting high-dimensional data into a low-dimensional space, which can save the storage and memory, in addition to speed up the processing process. These principal components (PCs) are eigenvectors of the data's covariance matrix. 
The main idea of PCA is straightforward and can be explained from two perspectives: one is to maximize the variance, and the other one is to minimize the reconstruction error. 

Inspired of maximizing the covariance matrix, many methods are extended from the vanilla PCA. Weighted PCA (WPCA) uses weighted distance to extract features in order to weaken the influence of noise and outliers \cite{fan2011weighted}. Inductive robust principal component analysis (IRPCA) can solve the limitation of RPCA \cite{de2001robust,xu2012robust} with nuclear-norm regularized minimization \cite{bao2012inductive}. Kernel PCA maps original data into a high-dimensional feature space to extract principal components ~\cite{hoffmann2007kernel}. Image principal component analysis (IMPCA) directly constructs the image covariance matrix from the image matrix to perform PCA \cite{yang2002image}. A similar matrix based method, two-dimensional PCA (2DPCA) for example, is proposed to extract features based on 2D image matrices instead of 1D vectors \cite{yang2004two}. Then, Generalized 2DPCA (G2DPCA) overcomes the limitations of 2DPCA \cite{kong2005generalized} and proposes a Bilateral-projection-based 2DPCA (B2DPCA) and a Kernel-based 2DPCA (K2DPCA). Inspired by 2DPCA with $\ell_1$-norm, a robust 2DPCA with non-greedy $\ell_1$-norm maximization is developed to get better performance in image domain \cite{wang2014robust2d}. $\ell_1$-norm-based block principal component analysis (BPCA-L1) improves the robustness of BPCA \cite{wang2012block} and a BPCA with non-greedy $\ell_1$-norm maximization moves a further step to obtain better solutions than BPCA-L1 \cite{li2015block}.

\begin{figure}[htp!]
	\centerline{\includegraphics[width=.5\linewidth,height=7cm]{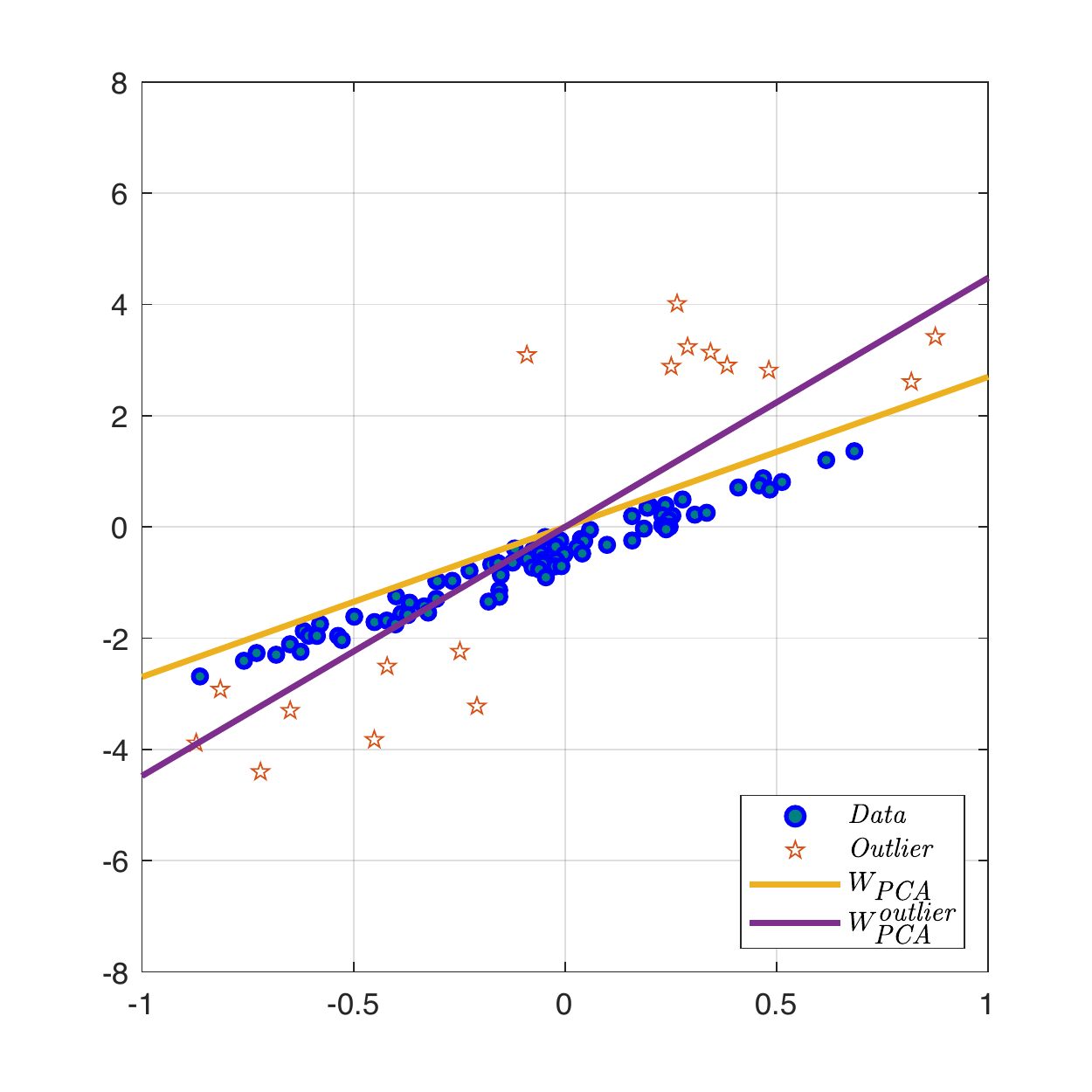}}
	\caption{Projection vectors of vanilla PCA on synthetic data with/without outliers.}
	\label{motivation}
\end{figure}

However, from the perspective of minimizing reconstruction error, as each data's error is squared, therefore it is sensitive to the outliers and noises. As Fig. \ref{motivation} demonstrates, when noise exists in data, which is widely existed in real world, the learned projection matrix may deviate from expected significantly. Considering that outliers and noises are difficult to be identified and eliminated in most cases, many researchers focus on improving the robustness of PCA by utilizing $\ell_{2,p} (0<p<2)$ or $\ell_1$-norm loss functions to alleviate this problem due to their robustness to noise~\cite{yang2019learning,brand2020learning,liu2018high,liu2018learning,liu2019learning,liu2019visual}. $\ell_1$-PCA is proposed to obtain the projection matrix by minimizing the $\ell_1$ norm reconstruction error \cite{ke2005robust}, which is to decompose an image matrix with a weighted combination of the nuclear norm and of the $\ell_1$-norm ~\cite{candes2011robust,zhang2016joint}. Double Robust Principal Component Analysis (DRPCA) extends RPCA by integrating a reconstruction error into its criterion functions to solve out-of-sample problem \cite{wang2020double}. A robust 2DPCA with Frobenius-norm minimization (F-2DPCA) is proposed by \cite{wang2017two} to alleviate the sensitiveness to outliers of 2DPCA by  using $\ell_1$-norm to sum up all different data points, developed in ~\cite{gao2017angle}. To best minimize the reconstruction error while keeping the rotational invariance property, R1-PCA is proposed \cite{ding2006r}. Motivated by that, $\ell_{2,p}$-norm PCA is developed to measure the distance matrix with $\ell_{2,p}$ norm and solve the $\ell_{2,p}$-norm PCA problem iteratively \cite{wang2017ell}. 



\section{Vanilla PCA and Robust PCA via Convex Relaxation}\label{sec:background}

Assume that we have a set of $n$ sample images $\mX = [\vx_1, \vx_2, \cdots, \vx_n]\in \Real^{m\times n}$, where $\vx_i\in \Real^m$ and $i \in[1,n]$ denotes the $i$-th data. Without loss of generality, we here assume that all images in the dataset are centralized. Generally, there are two perspectives of PCA, where the first one is to maximize the variance after project, which can be formulated as:
\begin{equation}\label{eq:variance}
\max	\|\mX^T\mW\|^2_F, \quad  s.t.\quad \mW^T\mW = \mId.
\end{equation}
 The second perspective of PCA is to minimize the reconstruction error and obtain the projection matrix $\mW \in \textbf{R}^{m\times k}$ ($k$ denotes the projected dimension), which is to:
\begin{equation}
	\label{eq:reconstruction}
	\min_{\mW} \|\mX - \mW\mW^T\mX\|_F^2=\sum_{i=1}^{n}\|\vx_i-\mW\mW^T\vx_i\|^2_2, \quad  s.t.\quad \mW^T\mW = \mId,
\end{equation}
where $\mW\mW^T$ is projection matrix and idempotent. The objective remeasure the gap The optimal projection matrix $\mW^*$ then can be obtained via:
\begin{equation}
	\label{W*}
	\mW^* = \arg\min_{\mW} \|\mX - \mW\mW^T\mX\|_F^2, \quad  s.t.\quad \mW^T\mW = \mId.
\end{equation}
By simple algebraic operation: $\|\mX - \mW\mW^T\mX\|_F^2=\optrb{(\mX - \mW\mW^T\mX)(\mX - \mW\mW^T\mX)^T}=\optr{\mX\mX^T-2\mX^T\mW\mW^T\mX+\mX^T\mW\mW^T\mW\mW^T\mX}=\optr{\mX\mX^T-\mX^T\mW\mW^T\mX}$ due to $\mW^T\mW = \mId$, therefore, Eq. (\ref{W*}) is equivalent to:
\begin{equation}
	\label{W^*}
	\mW^* = \arg\max_{\mW} \optr{\mW^T\mX\mX^T\mW}, \quad  s.t.\quad \mW^T\mW = \mId.
\end{equation}
which is equivalent to Eq. (\ref{eq:variance}).

One can see that either in Eq. (\ref{eq:variance}) or Eq. (\ref{eq:reconstruction}), the objective is formulated as squared Frobenius norm, which is known to be sensitive to outliers or noise. Inspired by robust norm and its successful application in various domains such as Nonnegative Matrix Factorization, $K$-means, \etc, there are researches focusing on utilizing robust norm in objective such as $\ell_{2,p} (p<2)$ or $\ell_1$-norm~\cite{nie2011robust,luo2016avoiding,wang2017ell}.

On the other hand, \cite{candes2011robust} proposed a new objective as:
\begin{equation}\label{eq:RPCA}
	\min \|\mL\|_*+\lambda\|\mS\|_1, \quad s.t. \quad \mL+\mS=\mM.
\end{equation}
By observing the difference between Eq. (\ref{eq:RPCA}) and Eq. (\ref{eq:reconstruction}), we can find some connection: 1). $\mL$ in Eq. (\ref{eq:RPCA}) plays a similar role as $\mW\mW^T\mX$ in Eq. (\ref{eq:reconstruction}). The reason is due to $\mW^T\mW = \mId$, then $rank(\mW)=k$, and therefore $rank(\mW\mW^T\mX)\le k$, which means $\mW\mW^T\mX$ has low-rank property. Instead of directly optimizing the rank term in objective, it optimizes the nuclear norm, which is convex. 2). Instead of measuring the loss with squared Frobenius norm, a more robust $\ell_1$ is applied, which is $\|\mS\|_1$ with $\mS=\mX-\mW\mW^T\mX$. Many works follow this formulation by introducing different robust norms such as $\ell_{2,p}$-norm ~\cite{nie2014optimal,shi2018robust}.
\section{Gentle Start}

Before we formally start our algorithm, we first present the following theorems, which will be very useful to establish our convergence analysis.
\begin{theorem}\label{thm:norm1}
	~\cite{liu2015robust} Given $\mX\in\Real^{m\times n}$, $\|\mX\|_1=\optr{\mX\mD\mX^T}$, where $\|\mX\|_1=\sum^n_j\sum^m_i|\mX_{ij}|$ and $\mD$ is diagonal matrix defined by $\mD(i,i) =\frac{\sum^m_{j=1}|\mX_{ji}|}{\|\mX_i\|^2_2}$.
\end{theorem}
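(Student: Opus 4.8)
The plan is to verify the identity by directly expanding the trace on the right-hand side and matching it, index by index, with the entrywise $\ell_1$ sum on the left. Write $\mX_i\in\Real^m$ for the $i$-th column of $\mX$, so that $\|\mX\|_1=\sum_{i=1}^n\|\mX_i\|_1=\sum_{i=1}^n\sum_{j=1}^m|\mX_{ji}|$, and recall $\mD\in\Real^{n\times n}$ with $\mD(i,i)=\frac{\sum_{j=1}^m|\mX_{ji}|}{\|\mX_i\|_2^2}$.

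First I would exploit the diagonal structure of $\mD$ to collapse $\optr{\mX\mD\mX^T}$ to a single sum over columns. Either observe that $\mX\mD\mX^T=\sum_{i=1}^n \mD(i,i)\,\mX_i\mX_i^T$ and take traces using $\trace(\mX_i\mX_i^T)=\|\mX_i\|_2^2$, or expand $(\mX\mD\mX^T)_{kk}=\sum_{i,j}\mX_{ki}\mD_{ij}\mX_{kj}$, use $\mD_{ij}=\mD(i,i)\,\delta_{ij}$, and sum over $k$. Either route gives $\optr{\mX\mD\mX^T}=\sum_{i=1}^n \mD(i,i)\,\|\mX_i\|_2^2$.

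Next I would substitute the definition of $\mD(i,i)$, whereupon the factor $\|\mX_i\|_2^2$ cancels and the $i$-th term becomes exactly $\sum_{j=1}^m|\mX_{ji}|=\|\mX_i\|_1$. Summing over $i$ then yields $\optr{\mX\mD\mX^T}=\sum_{i=1}^n\|\mX_i\|_1=\|\mX\|_1$, which is the claim.

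The only point needing any care — and the closest thing to an obstacle in an otherwise one-line computation — is the degenerate case of a zero column $\mX_i=\vzero$, for which $\mD(i,i)$ is formally $0/0$. I would dispatch this by adopting the standard convention that such entries are set to $0$ (equivalently, restricting the outer sum to indices with $\mX_i\neq\vzero$); under either reading both sides contribute $0$ for that index, so the identity is unaffected. No inequalities or further estimates are required.
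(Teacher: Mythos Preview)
Your proposal is correct and follows essentially the same direct-computation approach as the paper: expand $\optr{\mX\mD\mX^T}$ column by column, substitute the definition of $\mD(i,i)$, and cancel $\|\mX_i\|_2^2$. Your version is a bit more streamlined (using the rank-one expansion $\mX\mD\mX^T=\sum_i \mD(i,i)\,\mX_i\mX_i^T$ rather than writing out every matrix entry) and additionally handles the degenerate zero-column case, which the paper omits.
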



\begin{proof}
	
	\begin{equation}
		\begin{aligned}
			\mX=\begin{bmatrix}
				\mX_{11} & \mX_{12} & \cdots & \mX_{1n}\\
				\mX_{21} & \mX_{22} & \cdots & \mX_{2n}\\
				\vdots & \vdots & \ddots & \vdots\\
				\mX_{m1} & \mX_{m2} & \cdots & \mX_{mn}\\
			\end{bmatrix},\ 
			\mX^T=
			\begin{bmatrix}
				\mX_{11} & \mX_{21} & \cdots & \mX_{m1}\\
				\mX_{12} & \mX_{22} & \cdots & \mX_{m2}\\
				\vdots & \vdots & \ddots & \vdots\\
				\mX_{1n} & \mX_{2n} & \cdots & \mX_{mn}\\
			\end{bmatrix}.
		\end{aligned}
	\end{equation}
	By the definition of $\mD$:
	\begin{equation}
		\mD=\begin{bmatrix}
			\frac{\sum^m_{j=1}|\mX_{j1}|}{||\mX_1||^2_2} &  &  & \\
			& \frac{\sum^m_{j=1}|\mX_{j2}|}{||\mX_2||^2_2} &  & \\
			&  & \ddots & \\
			& &  & \frac{\sum^m_{j=1}|\mX_{jn}|}{||\mX_n||^2_2}\\
		\end{bmatrix}.
	\end{equation}
	
	%
	Therefore, we have:
	\begin{equation}
		diag(\mX\mD\mX^T) =
		\begin{bmatrix}
			\frac{\sum^m_{j=1}|\mX_{j1}|}{||\mX_1||^2_2}\mX_{11}^2+ \frac{\sum^m_{j=1}|\mX_{j2}|}{||\mX_2||^2_2}\mX_{12}^2 +  \cdots + \frac{\sum^m_{j=1}|\mX_{jn}|}{||\mX_n||^2_2}\mX_{1n}^2\\
			\frac{\sum^m_{j=1}|\mX_{j1}|}{||\mX_1||^2_2}\mX_{21}^2 + \frac{\sum^m_{j=1}|\mX_{j2}|}{||\mX_2||^2_2}\mX_{22}^2 + \cdots + \frac{\sum^m_{j=1}|\mX_{jn}|}{||\mX_n||^2_2}\mX_{2n}^2\\
			\vdots \\
			\frac{\sum^m_{j=1}|\mX_{j1}|}{||\mX_1||^2_2}\mX_{m1}^2 + \frac{\sum^m_{j=1}|\mX_{j2}|}{||\mX_2||^2_2}\mX_{m2}^2+ \cdots + \frac{\sum^m_{j=1}|\mX_{jn}|}{||\mX_n||^2_2}\mX_{mn}^2\\
		\end{bmatrix}
	\end{equation}
	
	Therefore, we have:
	\begin{equation}
		\optr{\mX\mD\mX^T}
		=|\mX_{11}| + |\mX_{21}| + \cdots + |\mX_{m1}| + \cdots + |\mX_{1n}| + \cdots + |\mX_{mn}|,
	\end{equation}
	
	which apparently is equivalent to $\|\mX\|_1$ and completes the proof.
\end{proof}
\begin{theorem}\label{thm:1}
	Denote $\mJ=\|\mX-\mW\mW^T\mX\|_2=\sum_{i=1}^{n}\|\vx_i-\mW\mW^T\vx_i\|_2$, then  $\nabla\mJ(\mW)=-\mX\mD\mX^T\mW$, where $\mD$ is diagonal matrix with $\mD(i,i)=\frac{1}{\|\vx_i-\mW\mW^T\vx_i\|}$.
\end{theorem}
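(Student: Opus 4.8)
The plan is to rewrite $\mJ$ as a reweighted trace, exactly in the spirit of Theorem~\ref{thm:norm1}, to reduce that trace via the orthogonality constraint to a quadratic form in $\mW$, and then to differentiate the quadratic form with the reweighting matrix held fixed.

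First I would record the $\ell_{2,1}$-analog of Theorem~\ref{thm:norm1}. Write $\mR=\mX-\mW\mW^T\mX$ with columns $\vr_i=\vx_i-\mW\mW^T\vx_i$. The $(k,k)$ entry of $\mR\mD\mR^T$ equals $\sum_{i=1}^n \mD(i,i)\,\mR_{ki}^2$, so $\optrb{\mR\mD\mR^T}=\sum_{i=1}^n \mD(i,i)\,\|\vr_i\|_2^2$, and with the stated weights $\mD(i,i)=1/\|\vr_i\|_2$ this collapses to $\sum_{i=1}^n\|\vr_i\|_2=\mJ$. This is precisely Theorem~\ref{thm:norm1} with the column $\ell_1$-norm in the numerator of $\mD$ replaced by the column $\ell_2$-norm; I would either invoke it in that form or reproduce the one-line column computation.

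Next I would collapse the right-hand side to a quadratic form in $\mW$. Since $\mR=(\mId-\mW\mW^T)\mX$ and $\mId-\mW\mW^T$ is symmetric,
\begin{equation}
	\mR\mD\mR^T=(\mId-\mW\mW^T)\,\mX\mD\mX^T\,(\mId-\mW\mW^T),
\end{equation}
and taking the trace, using cyclicity together with $\mW^T\mW=\mId$ exactly as Section~\ref{sec:background} reduces $\|\mX-\mW\mW^T\mX\|_F^2$, the cross term and the quartic term both become $\optrb{\mW^T\mX\mD\mX^T\mW}$, so that $\mJ=\optrb{\mX\mD\mX^T}-\optrb{\mW^T\mX\mD\mX^T\mW}$. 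Finally, holding $\mD$ fixed, the first trace is constant and $\nabla_\mW\optrb{\mW^T\mA\mW}=2\mA\mW$ for the symmetric matrix $\mA=\mX\mD\mX^T$, which gives $\nabla\mJ(\mW)=-2\,\mX\mD\mX^T\mW$ — the asserted $-\mX\mD\mX^T\mW$ up to the scalar $2$ that is absorbed into the step size of the update rule that consumes it.

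The one delicate point, and the only step I expect to need a word of justification, is freezing $\mD$ when differentiating: the genuine Euclidean gradient of $\mJ$ picks up an extra contribution from the dependence of the residual norms $\|\vr_i\|_2$ on $\mW$, so the clean formula $-\mX\mD\mX^T\mW$ must be read as the gradient of the reweighted surrogate $\optrb{\mX\mD\mX^T-\mW^T\mX\mD\mX^T\mW}$ at the current iterate — the standard iteratively-reweighted viewpoint in which $\mD$ is recomputed from the previous $\mW$. Once that reading is fixed, everything upstream is the bookkeeping of Theorem~\ref{thm:norm1} plus the trace algebra already carried out in Section~\ref{sec:background}, and the differentiation itself is a one-liner.
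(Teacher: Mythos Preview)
Your reweighted-trace identity and its reduction to $\optrb{\mX\mD\mX^T}-\optrb{\mW^T\mX\mD\mX^T\mW}$ are both correct, but the decision to freeze $\mD$ when differentiating is exactly where your argument diverges from the theorem and from the paper's proof. The paper does \emph{not} compute a surrogate gradient: it differentiates each summand $\|\vx_i-\mW\mW^T\vx_i\|_2$ directly via the chain rule on the square root, simplifying the inner quadratic with $\mW^T\mW=\mId$ to $\vx_i^T\vx_i-\vx_i^T\mW\mW^T\vx_i$. The $\tfrac{1}{2\|\cdot\|}$ from the square root cancels the $-2$ from $\nabla_\mW\optr{-\vx_i^T\mW\mW^T\vx_i}=-2\vx_i\vx_i^T\mW$, yielding $-\mD(i,i)\vx_i\vx_i^T\mW$ per term and hence $-\mX\mD\mX^T\mW$ on the nose. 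So the formula in the statement is the genuine gradient (after the constraint simplification), not an IRLS surrogate, and the missing factor of $2$ is not a cosmetic issue to be ``absorbed into the step size.''

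Your route can be repaired, but only by \emph{not} freezing $\mD$. Writing $g_i=\|\vr_i\|_2^2$ so that $\mJ=\sum_i\mD_{ii}\,g_i$ with $\mD_{ii}=g_i^{-1/2}$, the product rule gives
\[
\nabla_\mW(\mD_{ii}g_i)=\underbrace{\mD_{ii}\,\nabla g_i}_{\text{your frozen-}\mD\text{ term}}+\underbrace{g_i\,\nabla\mD_{ii}}_{=-\tfrac{1}{2}g_i^{-1/2}\nabla g_i},
\]
and the second piece subtracts exactly half of the first, leaving $\tfrac{1}{2}\mD_{ii}\nabla g_i=-\mD_{ii}\vx_i\vx_i^T\mW$. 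In other words, the ``extra contribution'' you flagged as a nuisance is precisely what removes the spurious factor of $2$; once you include it, your trace approach and the paper's column-by-column chain-rule computation coincide.
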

\begin{proof}
	\begin{equation}
		\begin{aligned}
			&\frac{\nabla\|\vx_i-\mW\mW^T\vx_i\|_2}{\nabla\mW}\\
			=&\frac{\nabla\sqrt{\optrb{(\vx_i-\mW\mW^T\vx_i)^T(\vx_i-\mW\mW^T\vx_i)}}}{\nabla\mW}\\
			=&\frac{1}{2\sqrt{\optrb{(\vx_i-\mW\mW^T\vx_i)^T(\vx_i-\mW\mW^T\vx_i)}}}\frac{\nabla\optrb{(\vx_i-\mW\mW^T\vx_i)^T(\vx_i-\mW\mW^T\vx_i)}}{\nabla\mW}\\
			=&\frac{\nabla\optr{\vx_i^T\vx_i-\vx_i^T\mW\mW^T\vx_i}}{\nabla\mW}\frac{1}{2\|\vx_i-\mW\mW^T\vx_i\|}\\
			=&\frac{-2\vx_i\vx_i^T\mW}{2\|\vx_i-\mW\mW^T\vx_i\|}\\
			=&\frac{-\vx_i\vx_i^T}{\|\vx_i-\mW\mW^T\vx_i\|}\mW\\
			=&-\mD(i,i)\vx_i\vx_i^T\mW.
		\end{aligned}
	\end{equation}
	Therefore, $\nabla\mJ(\mW)=\frac{\nabla\sum_{i=1}^{n}\|\vx_i-\mW\mW^T\vx_i\|_2}{\nabla\mW}=\frac{\sum_{i=1}^{n}\nabla\|\vx_i-\mW\mW^T\vx_i\|_2}{\nabla\mW}=-\mX\mD\mX^T\mW$.
\end{proof}
\begin{theorem}\label{thm:p}
	Denote $\mJ=\|\mX-\mW\mW^T\mX\|_2^p=\sum_{i=1}^{n}\|\vx_i-\mW\mW^T\vx_i\|_2^p$, then  $\nabla\mJ(\mW)=-\mX\mD\mX^T\mW$, where $\mD$ is diagonal matrix with $\mD(i,i)=p\cdot\|\vx_i-\mW\mW^T\vx_i\|^{p-2}$.
\end{theorem}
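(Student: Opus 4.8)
The plan is to imitate the proof of Theorem~\ref{thm:1} almost verbatim, with one extra application of the chain rule to absorb the exponent $p$. Write $r_i(\mW) = \|\vx_i - \mW\mW^T\vx_i\|_2$ for each $i$, so that the $i$-th summand of $\mJ$ is $r_i^p$. Since $\mJ = \sum_{i=1}^n r_i^p$ and differentiation is linear, it suffices to compute $\nabla_{\mW}\, r_i^p$ for a single $i$ and then sum. First I would apply the scalar chain rule $\nabla_{\mW}\, r_i^p = p\, r_i^{p-1}\, \nabla_{\mW}\, r_i$, which is valid wherever $r_i>0$ (the degenerate case $r_i=0$, where $r_i^p$ need not be differentiable for $p<1$, is excluded exactly as in the previous theorems).

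Next I would quote the intermediate computation already carried out in the proof of Theorem~\ref{thm:1}, namely
\begin{equation}
	\nabla_{\mW}\, r_i \;=\; \frac{\nabla_{\mW}\,\|\vx_i-\mW\mW^T\vx_i\|_2}{} \;=\; \frac{-\vx_i\vx_i^T}{\|\vx_i-\mW\mW^T\vx_i\|_2}\,\mW \;=\; \frac{-\vx_i\vx_i^T}{r_i}\,\mW,
\end{equation}
which itself rests on the identity $\nabla_{\mW}\optr{\vx_i^T\vx_i-\vx_i^T\mW\mW^T\vx_i} = -2\vx_i\vx_i^T\mW$ together with the square-root chain rule. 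Substituting this into the previous display gives
\begin{equation}
	\nabla_{\mW}\, r_i^p \;=\; p\, r_i^{p-1}\cdot\frac{-\vx_i\vx_i^T}{r_i}\,\mW \;=\; -\,p\, r_i^{p-2}\,\vx_i\vx_i^T\mW \;=\; -\,\mD(i,i)\,\vx_i\vx_i^T\mW,
\end{equation}
where in the last step I invoke the stated definition $\mD(i,i) = p\cdot\|\vx_i-\mW\mW^T\vx_i\|_2^{p-2} = p\, r_i^{p-2}$.

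Finally I would sum over $i$ and repackage the sum as a matrix product:
\begin{equation}
	\nabla\mJ(\mW) \;=\; \sum_{i=1}^n \nabla_{\mW}\, r_i^p \;=\; -\sum_{i=1}^n \mD(i,i)\,\vx_i\vx_i^T\mW \;=\; -\,\mX\mD\mX^T\mW,
\end{equation}
using $\mX\mD\mX^T = \sum_{i=1}^n \mD(i,i)\,\vx_i\vx_i^T$ since $\mD$ is diagonal and $\vx_i$ is the $i$-th column of $\mX$. This completes the argument. I do not expect any genuine obstacle here: the only points requiring care are the implicit assumption $r_i>0$ for all $i$ (so the power and the square root are differentiable), and correctly reusing the per-sample gradient from Theorem~\ref{thm:1} rather than re-deriving it; the case $p=1$ recovers Theorem~\ref{thm:1} exactly, which is a useful consistency check.
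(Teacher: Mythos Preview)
Your proposal is correct and follows essentially the same argument as the paper: apply the scalar chain rule $\nabla_{\mW} r_i^p = p\,r_i^{p-1}\nabla_{\mW} r_i$, invoke the per-sample gradient $\nabla_{\mW} r_i = -\vx_i\vx_i^T\mW/r_i$ from Theorem~\ref{thm:1}, combine to obtain $-\mD(i,i)\vx_i\vx_i^T\mW$, and sum. The only difference is cosmetic---you introduce the shorthand $r_i$ and remark explicitly on the $r_i>0$ assumption and the $p=1$ consistency check, neither of which the paper spells out.
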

\begin{proof}
	\begin{equation}
		\begin{aligned}
			&\frac{\nabla\|\vx_i-\mW\mW^T\vx_i\|_2^p}{\nabla\mW}\\
			=&p\cdot\|\vx_i-\mW\mW^T\vx_i\|_2^{p-1}\frac{\nabla\|\vx_i-\mW\mW^T\vx_i\|_2}{\nabla\mW}\\
			=&-p\cdot\|\vx_i-\mW\mW^T\vx_i\|_2^{p-2}\vx_i\vx_i^T\mW\\
			=&-\mD(i,i)\vx_i\vx_i^T\mW,
		\end{aligned}
	\end{equation}
	therefore $\nabla\mJ(\mW)=\frac{\nabla\sum_{i=1}^{n}\|\vx_i-\mW\mW^T\vx_i\|_2^p}{\nabla\mW}=\frac{\sum_{i=1}^{n}\nabla\|\vx_i-\mW\mW^T\vx_i\|_2^p}{\nabla\mW}=-\mX\mD\mX^T\mW$.
\end{proof}
One can see that when $p=1$, Theorem \ref{thm:p} degenerates into Theorem \ref{thm:1}. While $p=2$, it is in accordance with vanilla PCA. 
\begin{theorem}\label{thm:lip_cont_grad}
	~\cite{nesterov2018lectures} For a function $f$ with a Lipschitz continuous gradient $L$, if $\|\nabla f(\vx)-\nabla f(\vy)\|\le L\|\vx-\vy\|$, then $f(\vy)\le f(\vx)+\nabla f(\vx)^T(\vy-\vx)+\frac{L}{2}\|\vy-\vx\|^2.$
\end{theorem}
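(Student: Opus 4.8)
The plan is to reduce the multivariate inequality to a one-dimensional integration along the segment joining $\vx$ and $\vy$, and then combine the Cauchy--Schwarz inequality with the Lipschitz hypothesis. First I would introduce the auxiliary scalar function $g(t) = f\big(\vx + t(\vy - \vx)\big)$ for $t \in [0,1]$. Since $f$ is continuously differentiable, $g$ is differentiable with $g'(t) = \nabla f\big(\vx + t(\vy - \vx)\big)^T(\vy - \vx)$ by the chain rule.

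Next I would invoke the fundamental theorem of calculus to express the increment of $f$ as an integral of its directional derivative,
\begin{equation}
f(\vy) - f(\vx) = g(1) - g(0) = \int_0^1 \nabla f\big(\vx + t(\vy - \vx)\big)^T(\vy - \vx)\,dt,
\end{equation}
and then subtract the constant $\nabla f(\vx)^T(\vy - \vx) = \int_0^1 \nabla f(\vx)^T(\vy - \vx)\,dt$ from both sides, obtaining
\begin{equation}
f(\vy) - f(\vx) - \nabla f(\vx)^T(\vy - \vx) = \int_0^1 \big(\nabla f(\vx + t(\vy - \vx)) - \nabla f(\vx)\big)^T(\vy - \vx)\,dt.
\end{equation}

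The remaining step is to estimate the integrand. By Cauchy--Schwarz it is bounded by $\|\nabla f(\vx + t(\vy - \vx)) - \nabla f(\vx)\|\,\|\vy - \vx\|$, and the Lipschitz assumption applied to the pair $\vx + t(\vy - \vx)$ and $\vx$ gives $\|\nabla f(\vx + t(\vy - \vx)) - \nabla f(\vx)\| \le L\,\|t(\vy - \vx)\| = L\,t\,\|\vy - \vx\|$. Substituting this bound and evaluating $\int_0^1 L\,t\,\|\vy - \vx\|^2\,dt = \tfrac{L}{2}\|\vy - \vx\|^2$ yields the claimed inequality.

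Regarding difficulty: this is the classical descent lemma, so I do not expect a genuine obstacle. The only points that require mild care are the regularity assumption — we use that $f$ is continuously differentiable so that the fundamental theorem of calculus legitimately applies to $g$ on $[0,1]$ — and correctly carrying the factor $t$ through the Lipschitz estimate along the segment, since the argument $\vx + t(\vy - \vx)$ is at distance $t\|\vy - \vx\|$ from $\vx$, not $\|\vy - \vx\|$.
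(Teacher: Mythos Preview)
Your proof is correct and follows essentially the same approach as the paper: both write $f(\vy)-f(\vx)$ as an integral of the directional derivative along the segment from $\vx$ to $\vy$, subtract the linear term, apply Cauchy--Schwarz to the integrand, and invoke the Lipschitz hypothesis to obtain the factor $Lt\|\vy-\vx\|$ before integrating. The only cosmetic difference is that the paper bounds the absolute value $|f(\vy)-f(\vx)-\langle\nabla f(\vx),\vy-\vx\rangle|$, yielding a two-sided estimate, whereas you directly target the one-sided upper bound stated in the theorem.
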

\begin{proof}
	\begin{equation}
		\begin{aligned}
			f(\vy)&=f(\vx)+\int_{0}^{1}\lg\nabla f(\vx+\tau(\vy-\vx)),\vy-\vx\rg d\tau\\
			&=f(\vx)+\lg \nabla f(\vx), \vy-\vx\rg + \int_{0}^{1}\lg\nabla f(\vx+\tau(\vy-\vx))-\nabla f(\vx),\vy-\vx\rg d\tau,
		\end{aligned}
	\end{equation}
	therefore, we have
	\begin{equation}
		\begin{aligned}
			&|f(\vy)-f(\vx)-\lg \nabla f(\vx),\vy-\vx\rg|\\
			=&\left |\int_{0}^{1}\lg\nabla f(\vx+\tau(\vy-\vx)),\vy-\vx\rg d\tau\right |\\
			\le&\int_{0}^{1}|\lg\nabla f(\vx+\tau(\vy-\vx))-\nabla f(\vx),\vy-\vx\rg |d\tau\\
			\le&\int_{0}^{1}\|\nabla f(\vx+\tau(\vy-\vx))-\nabla f(\vx)\|\cdot\|\vy-\vx\|d\tau\\
			\le&\int_{0}^{1}\tau L\|\vy-\vx\|^2d\tau\\
			=&\frac{L}{2}\|\vy-\vx\|^2,
		\end{aligned}
	\end{equation}
	which completes the proof.
\end{proof}

\section{$\ell_1$-norm Robust PCA}\label{sec:l1}
In this section, we study $\ell_1$-norm robust PCA problem from the perspective of minimizing the construction error. In light of squared Frobenius norm leading sensitive solution to noise, we use $\ell_1$-norm measuring the different between original data and its projection:
\begin{equation}\label{eq:l1}
	\sum_{i=1}^{n}\|\vx_i-\mW\mW^T\vx_i\|_1=\|\mX-\mW\mW^T\mX\|_1, \quad  s.t.\quad \mW^T\mW = \mId.
\end{equation}

\subsection{Monotonically Decreasing Algorithm for $\ell_1$-norm Robust PCA}
In this subsection, we focus on optimizing
\begin{equation}
	\min_{\mW}\ \mJ=\|\mX-\mW\mW^T\mX\|_1, \quad  s.t.\quad \mW^T\mW = \mId.
\end{equation}
According to Theorem~\ref{thm:norm1}: 
\begin{equation}\begin{aligned}
\mJ&=\|\mX-\mW\mW^T\mX\|_1\\&=\optrb{(\mX-\mW\mW^T\mX)\mD(\mX-\mW\mW^T\mX)^T}\\&= \optr{\mX\mD\mX^T - \mW^T\mX\mD\mX^T\mW}.
\end{aligned}\end{equation}
Therefore $\nabla \mJ(\mW)=-2\mX\mD\mX^T\mW, \nabla_{\mW}^2 \mJ=-2\mX\mD\mX^T$. 

According to Taylor expansion:
\begin{equation}\label{eq:taylor}
	\begin{aligned}
		\mJ(\mW^+)-\mJ(\mW)&=\lg \mW^+-\mW,\nabla \mJ(\mW)\rg+\frac{1}{2}\lg \mW^+-\mW,\nabla_{\tilde{\mW}}^2 \mJ(\mW^+-\mW)\rg\\
		&\le \lg \mW^+-\mW,\nabla\mJ(\mW)\rg+\frac{L}{2}\| \mW^+-\mW\|^2_F,
	\end{aligned}
\end{equation}
where $L:=2\|\mX\mD\mX^T\|_2$.

For sake of convergence analysis, we reformulate Eq. (\ref{eq:l1}) as: 
\begin{equation}\label{eq:obj}
	\begin{split}
		\min_\textbf{W} \mJ(\textbf{W})+Q(\textbf{W}),\quad Q(\textbf{W})=
		\begin{cases}
			0, \textbf{W}^T\textbf{W}=\textbf{I}\\
			\infty, else.
		\end{cases}
	\end{split}
\end{equation}

Denote $\textbf{W}^+$ as the updated $\textbf{W}$ in the next step as
\begin{equation}
	\label{w+}
	\begin{split}
		\textbf{W}^+ = \text{argmin}_{\Bar{\textbf{W}}} \langle\nabla \mJ(\textbf{W}),\Bar{\textbf{W}}-\textbf{W} \rangle +\frac{1}{2t}||\Bar{\textbf{W}}-\textbf{W}||^2_F + Q(\Bar{\textbf{W}}).
	\end{split}
\end{equation}

Let $P(\Bar{\textbf{W}},\textbf{W})$ denote $\langle\nabla \mJ(\textbf{W}),\Bar{\textbf{W}}-\textbf{W} \rangle + \frac{1}{2t}||\Bar{\textbf{W}}-\textbf{W}||^2_F$. Then from the definition of $\textbf{W}^+$, we have
\begin{equation}
	\label{p_w}
	P(\textbf{W}^+,\textbf{W})\leq P(\textbf{W},\textbf{W}).
\end{equation}

Combining Eq. (\ref{w+}) and Eq. (\ref{p_w}), we have
\begin{equation}
	\label{Q_w}
	\begin{split}
		\langle\nabla \mJ(\textbf{W}),\textbf{W}^+-\textbf{W} \rangle + \frac{1}{2t}||\textbf{W}^+-\textbf{W}||^2_F + Q(\textbf{W}^+)\leq Q(\textbf{W}).
	\end{split}
\end{equation}
Then we obtain:
\begin{equation}
	\label{J_wQ_w}
	\begin{split}
		&\mJ(\textbf{W}) + Q(\textbf{W}) - \mJ(\textbf{W}^+) - Q(\textbf{W}^+)\\
		\geq &\mJ(\textbf{W}) - \mJ(\textbf{W}^+) + \langle\nabla \mJ(\textbf{W}),\textbf{W}^+-\textbf{W} \rangle + \frac{1}{2t}||\textbf{W}^+-\textbf{W}||^2_F \\
		\geq &-\langle\nabla \mJ(\textbf{W}),\textbf{W}^+-\textbf{W} \rangle - \frac{L}{2} ||\textbf{W}^+-\textbf{W}||^2_F  + \langle\nabla \mJ(\textbf{W}),\textbf{W}^+-\textbf{W} \rangle + \frac{1}{2t}||\textbf{W}^+-\textbf{W}||^2_F \\
		= &\frac{1}{2}(\frac{1}{t}-L)||\textbf{W}^+-\textbf{W}||^2_F, 
	\end{split}
\end{equation}
where the second line comes from Eq. (\ref{Q_w}) and third line from Eq. (\ref{eq:taylor}). The above equation tells that if we set $t\le\frac{1}{L}$, then the objective in Eq. (\ref{eq:obj}) will monotonically decrease~\cite{zhu2018dropping,bolte2014proximal}. 

We now turn to find the optimal solution for $\mW^+$. According to Eq. (\ref{w+}), we have
\begin{equation}
	\begin{split}
		\mW^+ &= \text{argmin}_{\Bar{\textbf{W}}}~ \langle\nabla \mJ(\textbf{W}),\Bar{\textbf{W}}-\textbf{W} \rangle+\frac{1}{2t}||\Bar{\textbf{W}}-\textbf{W}||^2_F + Q(\Bar{\textbf{W}})\\
		&= \text{argmin}_{\Bar{\textbf{W}}}~ \frac{1}{2t}||\Bar{\textbf{W}}-\textbf{W} + t\nabla \mJ(\textbf{W})||^2_F + Q(\Bar{\textbf{W}})\\
		&= \text{argmin}_{\Bar{\textbf{W}}^T\Bar{\textbf{W}}=\textbf{I}}~ \frac{1}{2t}||\Bar{\textbf{W}}-(\textbf{W}-t\nabla \mJ(\textbf{W}))||^2_F\\
		&= \text{argmin}_{\Bar{\textbf{W}}^T\Bar{\textbf{W}}=\textbf{I}}~ \frac{1}{2t}||\Bar{\textbf{W}}-(\textbf{W}+t\cdot2\textbf{XDX}^T\textbf{W})||^2_F.
	\end{split}    
\end{equation}

Let $\textbf{R}$ denote $\textbf{W}+t\cdot2\textbf{XDX}^T\textbf{W}$, then we have
\begin{equation}
	\begin{split}
		\textbf{W}^+ &= \text{argmin}_{\Bar{\textbf{W}}^T\Bar{\textbf{W}}=\textbf{I}}~ \frac{1}{2t}||\Bar{\textbf{W}}-\textbf{R}||^2_F
		= \text{argmiax}_{\Bar{\textbf{W}}^T\Bar{\textbf{W}}=\textbf{I}}~ \optr{\Bar{\textbf{W}}^T\textbf{R}}.
	\end{split}
\end{equation}

As a result~\cite{liu2019spherical}, if $[\textbf{U},\textbf{S},\textbf{V}]=SVD(\textbf{R})$, then $\Bar{\textbf{W}}=\textbf{UV}^T$ is the optimal solution.

\begin{algorithm}[tb]
	\caption{$\ell_1$-norm PCA}
	\label{alg:algorithm}
	\textbf{Input}: 
	$\textbf{X} = [\textbf{x}_1, \textbf{x}_2, \cdots, \textbf{x}_n]\in \textbf{R}^{m\times n}$, where $\textbf{X}$ is centralized;\\
	Initialized $\textbf{W}\in \textbf{R}^{m\times k}$ which satisfies $\textbf{W}^T\textbf{W} = \textbf{I}$.\\
	\textbf{Output}: $\textbf{W}$
	\begin{algorithmic}[1] 
		\While{not converge}
		\State Calculate diagonal matrix $\textbf{D}(i,i) = \frac{\sum^m_{j=1}|\textbf{Y}_{ji}|}{||\textbf{Y}_i||^2_2}$, where $\textbf{Y} = \textbf{X} - \textbf{WW}^T\textbf{X}$.
		\State $\textbf{R}=\textbf{W}+t\cdot2\textbf{XDX}^T\textbf{W}$, where $t = \frac{1}{||2\textbf{XDX}^T||_2}$. (or $\textbf{R}=\textbf{W}+t\cdot\textbf{XDX}^T\textbf{W}$, where $t = \frac{1}{||\textbf{XDX}^T||_2}$).
		\State $[\textbf{U},\textbf{S},\textbf{V}] = SVD(\textbf{R})$.
		\State $\textbf{W} = \textbf{UV}^T$ .
		\EndWhile
		\State \textbf{return} $\textbf{W}$
	\end{algorithmic}
\end{algorithm}
\subsection{A  faster version for optimization}
Inspired by \cite{beck2009fast,guan2012nenmf} which used Nesterov Accelerated Gradient (NAG) to accelerate process, we also utilize momentum for our algorithm, and we found in practice, the momentum based algorithm is always faster to converge. It is worth noting that different from \cite{beck2009fast,guan2012nenmf} where the objective is convex while the constraint is convex as well, in our objective, neither the objective nor the constraint is convex, therefore, no theoretical guarantee can be drawn at this stage, but empirical results demonstrate that it  always plays as an efficient accelerator. 
\begin{algorithm}[tb]
	\caption{Faster $\ell_1$-norm PCA}
	\label{alg:algorithm2}
	\textbf{Input}: 
	$\textbf{X} = [\textbf{x}_1, \textbf{x}_2, \cdots, \textbf{x}_n]\in \textbf{R}^{m\times n}$, where $\textbf{X}$ is centralized;\\
	Initialized $\textbf{W}\in \textbf{R}^{m\times k}$ which satisfies $\textbf{W}^T\textbf{W} = \textbf{I}$ and $s=1$.\\
	\textbf{Output}: $\textbf{W}$
	\begin{algorithmic}[1] 
		\While{not converge}
		\State Calculate diagonal matrix $\textbf{D}(i,i) = \frac{\sum^m_{j=1}|\textbf{Y}_{ji}|}{||\textbf{Y}_i||^2_2}$, where $\textbf{Y} = \textbf{X} - \textbf{WW}^T\textbf{X}$.
		\State $\textbf{V}=\textbf{W}+\frac{s-2}{s+1}(\textbf{W}-\textbf{W}_{old})$
		\State $\textbf{R}=\textbf{V}+t\cdot\textbf{XDX}^T\textbf{V}$, where $t = \frac{1}{\|\textbf{XDX}^T\|_2}$.
		\State $[\textbf{U},\textbf{S},\textbf{V}] = SVD(\textbf{R})$.
		\State $\textbf{W} = \textbf{UV}^T$ .
		\State $s = s+1$.
		\EndWhile
		\State \textbf{return} $\textbf{W}$
	\end{algorithmic}
\end{algorithm}
\begin{algorithm}[h!]
	\caption{A very Fast $\ell_1$-norm PCA}
	\label{alg:algorithm1}
	\textbf{Input}: 
	$\textbf{X} = [\textbf{x}_1, \textbf{x}_2, \cdots, \textbf{x}_n]\in \textbf{R}^{m\times n}$, where $\textbf{X}$ is centralized;\\
	Initialized $\textbf{W}\in \textbf{R}^{m\times k}$ which satisfies $\textbf{W}^T\textbf{W} = \textbf{I}$.\\
	\textbf{Output}: $\textbf{W}$
	\begin{algorithmic}[1] 
		\While{not converge}
		\State Calculate diagonal matrix $\textbf{D}(i,i) = \frac{\sum^m_{j=1}|\textbf{Y}_{ji}|}{||\textbf{Y}_i||^2_2}$, where $\textbf{Y} = \textbf{X} - \textbf{WW}^T\textbf{X}$.
		\State $[\textbf{U},\textbf{S},\textbf{U}] = SVD(\textbf{XDX}^T)$.
		\State $\textbf{W} = \textbf{U}(:,1:r)$ .
		\EndWhile
		\State \textbf{return} $\textbf{W}$
	\end{algorithmic}
\end{algorithm}

\subsection{A Very fast version for optimization}
Recall Eq. (\ref{eq:l1}) that $\mJ=tr(\textbf{XDX}^T - \textbf{W}^T\textbf{XDX}^T\textbf{W})$, as we are going to minimize $\mJ$, that is equal to maximize $tr(\textbf{W}^T\textbf{XDX}^T\textbf{W})$, given the constraint of $\textbf{W}^T\textbf{W}=\textbf{I}$, and symmetry of $\textbf{XDX}^T$ (one can even prove that it is positive definite), the optimal $\textbf{W}$ can be obtained by SVD: $[\mU,\mS,\mU]=svd(\textbf{XDX}^T)$, then $\textbf{W}=\mU(:,1:r)$. It can be summarized in Algorithm \ref{alg:algorithm1}. It is worth noting that the faster version may suffer from non-monotonically decreasing theoretically, however in most cases it is reliable and smoothing as Fig. (\ref{fig:l1pca}) illustrates.

We conclude this part by pointing out that though Algorithm~\ref{alg:algorithm} has rigorous theoretical guarantee while the rest two not, but in practice, it still suffers from slow convergence as within each iteration, an SVD is executed with complexity $\mathcal{O}(m^3)$. The loop is considerably long as it is fundamentally a gradient descent method which requires many iterations to converge.

\begin{figure}[h!]
	\centering
	\includegraphics[width=.6\linewidth]{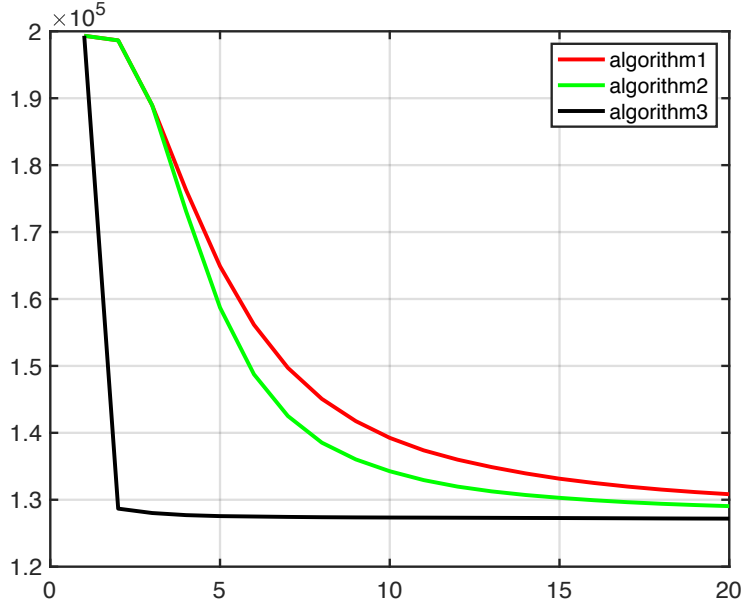}
	\caption{Objective changes with updates.}
	\label{fig:l1pca}
\end{figure}

\section{$\ell_{2,p}$-norm Robust PCA}
\begin{algorithm}[tb]
	\caption{$\ell_{2,p}$-norm PCA}
	\label{alg:algorithm3}
	\textbf{Input}: 
	$\textbf{X} = [\textbf{x}_1, \textbf{x}_2, \cdots, \textbf{x}_n]\in \textbf{R}^{m\times n}$, where $\textbf{X}$ is centralized;\\
	Initialized $\textbf{W}\in \textbf{R}^{m\times k}$ which satisfies $\textbf{W}^T\textbf{W} = \textbf{I}$.\\
	\textbf{Output}: $\textbf{W}$
	\begin{algorithmic}[1] 
		\While{not converge}
		\State Calculate diagonal matrix $\mD(i,i)=p\cdot\|\vx_i-\mW\mW^T\vx_i\|^{p-2}$.
		\State $\textbf{R}=\textbf{W}+t\cdot\textbf{XDX}^T\textbf{W}$, where $t = \frac{1}{||\textbf{XDX}^T||_2}$.
		\State $[\textbf{U},\textbf{S},\textbf{V}] = SVD(\textbf{R})$.
		\State $\textbf{W} = \textbf{UV}^T$ .
		\EndWhile
		\State \textbf{return} $\textbf{W}$
	\end{algorithmic}
\end{algorithm}

\begin{algorithm}[tb]
	\caption{Faster $\ell_{2,p}$-norm PCA}
	\label{alg:algorithm5}
	\textbf{Input}: 
	$\textbf{X} = [\textbf{x}_1, \textbf{x}_2, \cdots, \textbf{x}_n]\in \textbf{R}^{m\times n}$, where $\textbf{X}$ is centralized;\\
	Initialized $\textbf{W}\in \textbf{R}^{m\times k}$ which satisfies $\textbf{W}^T\textbf{W} = \textbf{I}$ and $s=1$.\\
	\textbf{Output}: $\textbf{W}$
	\begin{algorithmic}[1] 
		\While{not converge}
		\State Calculate diagonal matrix $\mD(i,i)=p\cdot\|\vx_i-\mW\mW^T\vx_i\|^{p-2}$.
		\State $\textbf{V}=\textbf{W}+\frac{s-2}{s+1}(\textbf{W}-\textbf{W}_{old})$
		\State $\textbf{R}=\textbf{V}+t\cdot\textbf{XDX}^T\textbf{V}$, where $t = \frac{1}{\|\textbf{XDX}^T\|_2}$.
		\State $[\textbf{U},\textbf{S},\textbf{V}] = SVD(\textbf{R})$.
		\State $\textbf{W} = \textbf{UV}^T$ .
		\State $s = s+1$.
		\EndWhile
		\State \textbf{return} $\textbf{W}$
	\end{algorithmic}
\end{algorithm}

\begin{algorithm}[h!]
	\caption{A very Fast $\ell_{2,p}$-norm PCA}
	\label{alg:algorithm4}
	\textbf{Input}: 
	$\textbf{X} = [\textbf{x}_1, \textbf{x}_2, \cdots, \textbf{x}_n]\in \textbf{R}^{m\times n}$, where $\textbf{X}$ is centralized;\\
	Initialized $\textbf{W}\in \textbf{R}^{m\times k}$ which satisfies $\textbf{W}^T\textbf{W} = \textbf{I}$.\\
	\textbf{Output}: $\textbf{W}$
	\begin{algorithmic}[1] 
		\While{not converge}
		\State Calculate diagonal matrix $\mD(i,i)=p\cdot\|\vx_i-\mW\mW^T\vx_i\|^{p-2}$.
		\State $[\textbf{U},\textbf{S},\textbf{U}] = SVD(\textbf{XDX}^T)$.
		\State $\textbf{W} = \textbf{U}(:,1:r)$ .
		\EndWhile
		\State \textbf{return} $\textbf{W}$
	\end{algorithmic}
\end{algorithm}
In this section, we study $\ell_{2,p}$-norm robust PCA problem from the perspective of minimizing the construction error. In light of squared Frobenius norm leading sensitive solution to noise, we use $\ell_{2,p}$-norm measuring the different between original data and its projection:
\begin{equation}\label{eq:l2p}
	\sum_{i=1}^{n}\|\vx_i-\mW\mW^T\vx_i\|_2^p=\|\mX-\mW\mW^T\mX\|_2^p, \quad  s.t.\quad \mW^T\mW = \mId.
\end{equation}
where we define $\|\mZ\|_2^p=\sum_{i}\|\vz_i\|^p_2$, where $\vz_i$ denotes the $i$-th column of $\mZ$.
\subsection{Monotonically Decreasing Algorithm for $\ell_{2,p}$-norm Robust PCA}
Similar to Section~\ref{sec:l1}, we can propose the updating algorithm via:
\begin{equation}
	\begin{split}
		\textbf{W}^+ = \text{argmin}_{\Bar{\textbf{W}}} \langle\nabla \mJ(\textbf{W}),\Bar{\textbf{W}}-\textbf{W} \rangle +\frac{1}{2t}||\Bar{\textbf{W}}-\textbf{W}||^2_F + Q(\Bar{\textbf{W}})
	\end{split}
\end{equation}
where $\nabla \mJ(\textbf{W})=-\mX\mD\mX^T\mW$ is from Theorem~\ref{thm:p}, also one can set $t=\frac{1}{L}$ where $L=\|\mX\mD\mX^T\|_2$. 

The whole process to prove its convergence is almost the same as the above Section, therefore we skip the details but put more focus on the accelerating version.
\subsection{Accelerated Algorithm for $\ell_{2,p}$-norm Robust PCA}
If we denote $\mH=\nabla^2\mJ(\mW)=-\mX\mD\mX^T$, then $\nabla \mJ(\mW)=\mH\mW$.
\begin{equation}\label{eq:fastest}\begin{split}
		&\min_{\mW^+} \mJ(\mW^+)=\min_{\mW^+} \mJ(\mW^+)-\mJ(\mW)\\\approx&\min_{\mW^+}\langle \nabla \mJ(\mW), \mW^+-\mW\rangle + \frac{1}{2} \langle \mW^+-\mW, \mH(\mW^+-\mW)\rangle \\=&\min_{\mW^+}\frac{1}{2}\langle \mW^+-\mW, \mH(\mW^+-\mW)+2\mH\mW\rangle
		\\=&\min_{\mW^+}\frac{1}{2}\langle \mW^+-\mW, \mH(\mW^++\mW)\rangle
		\\=&\min_{\mW^+}\frac{1}{2}\optr{(\mW^+-\mW)^T\mH(\mW^++\mW)}
		\\=&\min_{\mW^+}\frac{1}{2}\optr{\mH(\mW^+\mW^{+T}+\mW\mW^{+T}-\mW^+\mW^T)}
		\\=&\min_{\mW^+}\frac{1}{2}\optr{\mW^{+T}\mH\mW^+}
		\\=&\max_{\mW^+}\frac{1}{2}\optr{\mW^{+T}\mX\mD\mX^T\mW^+}
	\end{split}
\end{equation}
Apparently, if $[\mU,\mS,\mU]=svd(\mX\mD\mX^T)$, then $\mW^+=\mU(:,1:r)$. Apparently the two different robust norms PCA algorithms in Alg. ~\ref{alg:algorithm} v.s. Alg. ~\ref{alg:algorithm3}, Alg. ~\ref{alg:algorithm2} v.s. Alg. ~\ref{alg:algorithm5}, and Alg. ~\ref{alg:algorithm1} v.s. Alg. ~\ref{alg:algorithm4} they are almost identical except the difference in calculating diagonal matrix $\mD$.
\bibliography{main}

\begin{thebibliography}{10}

\bibitem{pearson1901liii}
K.~Pearson, ``Liii. on lines and planes of closest fit to systems of points in
  space,'' {\em The London, Edinburgh, and Dublin philosophical magazine and
  journal of science}, vol.~2, no.~11, pp.~559--572, 1901.

\bibitem{fan2011weighted}
Z.~Fan, E.~Liu, and B.~Xu, ``Weighted principal component analysis,'' in {\em
  International Conference on Artificial Intelligence and Computational
  Intelligence}, pp.~569--574, Springer, 2011.

\bibitem{de2001robust}
F.~De~la Torre and M.~J. Black, ``Robust principal component analysis for
  computer vision,'' in {\em Proceedings Eighth IEEE International Conference
  on Computer Vision. ICCV 2001}, vol.~1, pp.~362--369, IEEE, 2001.

\bibitem{xu2012robust}
H.~Xu, C.~Caramanis, and S.~Sanghavi, ``Robust pca via outlier pursuit,'' {\em
  IEEE transactions on information theory}, vol.~58, no.~5, pp.~3047--3064,
  2012.

\bibitem{bao2012inductive}
B.-K. Bao, G.~Liu, C.~Xu, and S.~Yan, ``Inductive robust principal component
  analysis,'' {\em IEEE transactions on image processing}, vol.~21, no.~8,
  pp.~3794--3800, 2012.

\bibitem{hoffmann2007kernel}
H.~Hoffmann, ``Kernel pca for novelty detection,'' {\em Pattern recognition},
  vol.~40, no.~3, pp.~863--874, 2007.

\bibitem{yang2002image}
J.~Yang and J.-y. Yang, ``From image vector to matrix: A straightforward image
  projection technique—impca vs. pca,'' {\em Pattern Recognition}, vol.~35,
  no.~9, pp.~1997--1999, 2002.

\bibitem{yang2004two}
J.~Yang, D.~Zhang, A.~F. Frangi, and J.-y. Yang, ``Two-dimensional pca: a new
  approach to appearance-based face representation and recognition,'' {\em IEEE
  transactions on pattern analysis and machine intelligence}, vol.~26, no.~1,
  pp.~131--137, 2004.

\bibitem{kong2005generalized}
H.~Kong, L.~Wang, E.~K. Teoh, X.~Li, J.-G. Wang, and R.~Venkateswarlu,
  ``Generalized 2d principal component analysis for face image representation
  and recognition,'' {\em Neural Networks}, vol.~18, no.~5-6, pp.~585--594,
  2005.

\bibitem{wang2014robust2d}
R.~Wang, F.~Nie, X.~Yang, F.~Gao, and M.~Yao, ``Robust 2dpca with non-greedy
  $\ell_1$-norm maximization for image analysis,'' {\em IEEE transactions on
  cybernetics}, vol.~45, no.~5, pp.~1108--1112, 2014.

\bibitem{wang2012block}
H.~Wang, ``Block principal component analysis with $\ell_1$-norm for image
  analysis,'' {\em Pattern recognition letters}, vol.~33, no.~5, pp.~537--542,
  2012.

\bibitem{li2015block}
B.~N. Li, Q.~Yu, R.~Wang, K.~Xiang, M.~Wang, and X.~Li, ``Block principal
  component analysis with nongreedy $\ell_1$-norm maximization,'' {\em IEEE
  transactions on cybernetics}, vol.~46, no.~11, pp.~2543--2547, 2015.

\bibitem{yang2019learning}
H.~Yang, {\em Learning strictly orthogonal p-order nonnegative laplacian
  embedding via smoothed iterative reweighted method}.
\newblock Colorado School of Mines, 2019.

\bibitem{brand2020learning}
L.~Brand, X.~Yang, K.~Liu, S.~Elbeleidy, H.~Wang, H.~Zhang, and F.~Nie,
  ``Learning robust multilabel sample specific distances for identifying hiv-1
  drug resistance,'' {\em Journal of Computational Biology}, vol.~27, no.~4,
  pp.~655--672, 2020.

\bibitem{liu2018high}
K.~Liu and H.~Wang, ``High-order co-clustering via strictly orthogonal and
  symmetric l1-norm nonnegative matrix tri-factorization,'' in {\em Proceedings
  of the Twenty-Seventh International Joint Conference on Artificial
  Intelligence}, 2018.

\bibitem{liu2018learning}
K.~Liu, H.~Wang, F.~Nie, and H.~Zhang, ``Learning multi-instance enriched image
  representations via non-greedy ratio maximization of the l1-norm distances,''
  in {\em Proceedings of the IEEE Conference on Computer Vision and Pattern
  Recognition}, pp.~7727--7735, 2018.

\bibitem{liu2019learning}
K.~Liu, L.~Brand, H.~Wang, and F.~Nie, ``Learning robust distance metric with
  side information via ratio minimization of orthogonally constrained l21-norm
  distances,'' in {\em Proceedings of the Twenty-Eighth International Joint
  Conference on Artificial Intelligence}, 2019.

\bibitem{liu2019visual}
K.~Liu, H.~Wang, F.~Han, and H.~Zhang, ``Visual place recognition via robust
  ℓ2-norm distance based holism and landmark integration,'' in {\em
  Proceedings of the AAAI Conference on Artificial Intelligence}, vol.~33,
  pp.~8034--8041, 2019.

\bibitem{ke2005robust}
Q.~Ke and T.~Kanade, ``Robust $\ell_1$-norm factorization in the presence of
  outliers and missing data by alternative convex programming,'' in {\em 2005
  IEEE Computer Society Conference on Computer Vision and Pattern Recognition
  (CVPR'05)}, vol.~1, pp.~739--746, IEEE, 2005.

\bibitem{candes2011robust}
E.~J. Cand{\`e}s, X.~Li, Y.~Ma, and J.~Wright, ``Robust principal component
  analysis?,'' {\em Journal of the ACM (JACM)}, vol.~58, no.~3, pp.~1--37,
  2011.

\bibitem{zhang2016joint}
Z.~Zhang, F.~Li, M.~Zhao, L.~Zhang, and S.~Yan, ``Joint low-rank and sparse
  principal feature coding for enhanced robust representation and visual
  classification,'' {\em IEEE Transactions on Image Processing}, vol.~25,
  no.~6, pp.~2429--2443, 2016.

\bibitem{wang2020double}
Q.~Wang, Q.~Gao, G.~Sun, and C.~Ding, ``Double robust principal component
  analysis,'' {\em Neurocomputing}, vol.~391, pp.~119--128, 2020.

\bibitem{wang2017two}
Q.~Wang and Q.~Gao, ``Two-dimensional pca with f-norm minimization,'' in {\em
  Thirty-First AAAI Conference on Artificial Intelligence}, 2017.

\bibitem{gao2017angle}
Q.~Gao, L.~Ma, Y.~Liu, X.~Gao, and F.~Nie, ``Angle 2dpca: A new formulation for
  2dpca,'' {\em IEEE transactions on cybernetics}, vol.~48, no.~5,
  pp.~1672--1678, 2017.

\bibitem{ding2006r}
C.~Ding, D.~Zhou, X.~He, and H.~Zha, ``R 1-pca: rotational invariant
  $\ell_1$-norm principal component analysis for robust subspace
  factorization,'' in {\em Proceedings of the 23rd international conference on
  Machine learning}, pp.~281--288, 2006.

\bibitem{wang2017ell}
Q.~Wang, Q.~Gao, X.~Gao, and F.~Nie, ``$l_{2,p}$-norm based pca for image
  recognition,'' {\em IEEE Transactions on Image Processing}, vol.~27, no.~3,
  pp.~1336--1346, 2017.

\bibitem{nie2011robust}
F.~Nie, H.~Huang, C.~Ding, D.~Luo, and H.~Wang, ``Robust principal component
  analysis with non-greedy ℓ1-norm maximization,'' in {\em Twenty-Second
  International Joint Conference on Artificial Intelligence}, 2011.

\bibitem{luo2016avoiding}
M.~Luo, F.~Nie, X.~Chang, Y.~Yang, A.~Hauptmann, and Q.~Zheng, ``Avoiding
  optimal mean robust pca/2dpca with non-greedy l1-norm maximization,'' in {\em
  Proceedings of International Joint Conference on Artificial Intelligence},
  pp.~1802--1808, 2016.

\bibitem{nie2014optimal}
F.~Nie, J.~Yuan, and H.~Huang, ``Optimal mean robust principal component
  analysis,'' in {\em International conference on machine learning},
  pp.~1062--1070, PMLR, 2014.

\bibitem{shi2018robust}
X.~Shi, F.~Nie, Z.~Lai, and Z.~Guo, ``Robust principal component analysis via
  optimal mean by joint ℓ2, 1 and schatten p-norms minimization,'' {\em
  Neurocomputing}, vol.~283, pp.~205--213, 2018.

\bibitem{liu2015robust}
K.~Liu and H.~Wang, ``Robust multi-relational clustering via ℓ1-norm
  symmetric nonnegative matrix factorization-norm symmetric nonnegative matrix
  factorization,'' in {\em Proceedings of the 53rd Annual Meeting of the
  Association for Computational Linguistics and the 7th International Joint
  Conference on Natural Language Processing (Volume 2: Short Papers)},
  pp.~397--401, 2015.

\bibitem{nesterov2018lectures}
Y.~Nesterov {\em et~al.}, {\em Lectures on convex optimization}, vol.~137.
\newblock Springer, 2018.

\bibitem{zhu2018dropping}
Z.~Zhu, X.~Li, K.~Liu, and Q.~Li, ``Dropping symmetry for fast symmetric
  nonnegative matrix factorization,'' {\em arXiv preprint arXiv:1811.05642},
  2018.

\bibitem{bolte2014proximal}
J.~Bolte, S.~Sabach, and M.~Teboulle, ``Proximal alternating linearized
  minimization for nonconvex and nonsmooth problems,'' {\em Mathematical
  Programming}, vol.~146, no.~1, pp.~459--494, 2014.

\bibitem{liu2019spherical}
K.~Liu, Q.~Li, H.~Wang, and G.~Tang, ``Spherical principal component
  analysis,'' in {\em Proceedings of the 2019 SIAM International Conference on
  Data Mining}, pp.~387--395, SIAM, 2019.

\bibitem{beck2009fast}
A.~Beck and M.~Teboulle, ``A fast iterative shrinkage-thresholding algorithm
  for linear inverse problems,'' {\em SIAM journal on imaging sciences},
  vol.~2, no.~1, pp.~183--202, 2009.

\bibitem{guan2012nenmf}
N.~Guan, D.~Tao, Z.~Luo, and B.~Yuan, ``Nenmf: An optimal gradient method for
  nonnegative matrix factorization,'' {\em IEEE Transactions on Signal
  Processing}, vol.~60, no.~6, pp.~2882--2898, 2012.

\end{thebibliography}
\bibliographystyle{ieeetr}
\end{document}